\newcommand{\ls}{\hspace{0em}}     
\newtheorem{lemma}{Lemma}
\newtheorem{theorem}{Theorem}
\newtheorem{definition}{Definition}
\DeclareMathOperator*{\argmin}{arg\ min.}
\title{\LARGE \bf
Identification of Fully Physical Consistent Inertial Parameters using Optimization on Manifolds
}
\author{Silvio Traversaro$^{1}$, Stanislas Brossette$^{2}$, Adrien Escande$^{3}$, and Francesco Nori$^{1}$
\thanks{*This paper was supported by the FP7 EU projects
CoDyCo (No. 600716 ICT-2011.2.1 - Cognitive Systems
and Robotics), Koroibot (No. 611909 ICT-2013.2.1 - Cognitive Systems and Robotics) and the H2020 European project COMANOID.}
\thanks{$^{1}$Silvio Traversaro and Francesco Nori are with iCub Facility, Istituto Italiano di Tecnologia, Via Morego 30, Genoa 16163, Italy
        {\tt\small name.surname@iit.it}}%
\thanks{$^{2}$Stanislas Brossette is with CNRS-UM2 LIRMM Interactive Digital Humans, UMR5600, Montpellier, France and CNRS-AIST Joint Robotics Laboratory (JRL), UMI3218/RL, Tsukuba, Japan.
        {\tt\small name.surname@gmail.com}}%
\thanks{$^{3}$Adrien Escande is with CNRS-AIST Joint Robotics Laboratory (JRL), UMI3218/RL, Tsukuba, Japan.
        {\tt\small name.surname@gmail.com}}%
}
\newcommand{\SO}{\textrm{SO}}
\newcommand{\R}{\mathbb{R}}
\newcommand{\ipspace}{\mathfrak{P}}
\newcommand{\rmv}{{\rm v}}      
\newcommand{\rmf}{{\rm f}}      
\newcommand{\rma}{{\rm a}}      
\begin{document}

\maketitle
\thispagestyle{empty}
\pagestyle{empty}

\begin{abstract}
This paper presents a new condition, the \emph{fully physical consistency} for a set of inertial parameters to determine if they can be generated by a physical rigid body. The proposed condition ensure both the positive definiteness and the triangular inequality of 3D inertia matrices as opposed to existing techniques in which the triangular inequality constraint is ignored. This paper presents also a new parametrization that naturally ensures that the inertial parameters are \emph{fully physical consistency}. The proposed parametrization is exploited to reformulate the inertial identification problem as a \emph{manifold optimization problem}, that ensures that the identified parameters can always be generated by a physical body.
The proposed optimization problem has been validated with a set of experiments on the iCub humanoid robot. 
\end{abstract}

\section{INTRODUCTION}
A large part of existing robotic systems are modeled as a system of multiple rigid bodies. 
The knowledge of the dynamical characteristics of these rigid bodies is a key assumption of model-based control and estimation techniques. The dynamics of a rigid body, i.e. how the acceleration of a rigid body is related to the forces applied on it, is completely described by the mass distribution of the body in the 3D space. 
The mass distribution itself is completely described by 10 \emph{inertial parameters} \cite{hollerbach2008model}. These parameters may be available if a good Computer-Aided Design (CAD) model of the robot is available, but often such models are either not available, or the mass distribution of the rigid bodies in the robot changes during operation, as in the case of an end effector that grabs and heavy object. 


 Inverse robot dynamics models can be written linearly with respect to the inertial parameters of the rigid bodies composing the robot. Classical identification techniques \cite{hollerbach2008model,ayusawa2014identifiability} consider the parameters of each body to be an element of the Euclidean space $\mathbb{R}^{10}$. Exploiting this fact, the inertial parameters identification problem has been classically posed as a \emph{Linear Least Square} optimization problem \cite{hollerbach2008model}. The resulting problem is convenient from a computational point of view, but it neglects the fact that not all vectors in $\mathbb{R}^{10}$ can be generated by a physical rigid body, i.e. it is possible that some inertial parameters are identified even if no physical rigid body could generate them. 

A necessary condition for the inertial parameters to be generated by a physical rigid body was first proposed in \cite{yoshida1994}: the \emph{physical consistency} condition.  
This condition is important for control purposes because it ensures, if it is valid for all the links of a robot, the positive definiteness and the invertibility of the joint mass matrix \cite{yoshida2000}. This property is a key assumption in proving the stability of model-based control laws. 
The \emph{physical consistency} has been enforced in identification of inertial parameters  using several techniques: \cite{yoshida2000,mata2005,gautier2013identification,gautier2013positive,sousa2014physical,jovic2015identification}. However this condition is not \emph{sufficient}: it is possible that some inertial parameters that respect this condition do not correspond to any physical body: in particular this condition does not encode the \emph{triangle inequalities} of the 3D inertia matrix \cite[Chapter 3]{wittenburg2007dynamics}, as it will be explained in the remainder of the paper.

The main contribution of this paper is a new necessary and \emph{sufficient} condition for the inertial parameters to be generated by rigid body: the \emph{full physical consistency} condition. We show that this condition implies the already proposed \emph{physical consistency} condition and that the triangle inequalities are respected. Furthermore, we propose a nonlinear optimization formulation that takes into consideration this constraint by using state of the art optimization techniques on non-Euclidean manifolds \cite{brossette2015humanoid}. The proposed optimization technique is validated with a rigid body inertial identification experiment on the arm of the iCub humanoid robot.

For the sake of simplicity, in this paper, we only consider the problem of identifying the inertial parameters of a single rigid body. However, the \emph{full physical consistency} condition and the optimization on manifolds are general contributions, that could be applied to the case of the identification of inertial parameters in generic multibody structures. 

The paper is organized as follows. Section II presents the
notations  used  in  the  paper and the background on rigid body dynamics.
Section III details the proposed \emph{full physical consistency} condition, the proposed nonlinear parametrization of the inertial parameters that ensures that this condition is always satisfied and the optimization technique on the manifold of the proposed parametrization.
Section IV describe the experiments used for validation.
Remarks  and perspectives conclude the paper.

\section{BACKGROUND}
\subsection{Notation}
The following notation is used throughout the paper.
\begin{itemize}
 \item The set of real numbers is denoted by $\mathbb{R}$, while the set of nonnegative real numbers is denoted by $\mathbb{R}_{\geq 0}$. 
 \item Let $u$ and $v$ be two $n$-dimensional column vectors of real numbers, i.e. $u,v \in \mathbb{R}^n$, 
 their inner product is denoted as $u^\top v$, with ``$\top$'' the transpose operator.
\item
$\SO(3)$ denotes
the set of $\mathbb{R}^{3 \times 3}$ orthogonal matrices with determinant equal to one, namely
\begin{align*}
\SO(3) :=  \{\, R \in \mathbb{R}^{3 \times 3} \mid R^T R = I_3 , \hspace{0.3em} \operatorname{det}(R) = 1 \,\}.
\end{align*}
\item Given $u,v \in \mathbb{R}^3$, $S(u) \in \R^{3\times3}$ denotes the skew-symmetric matrix-valued operator associated with the cross product in 
  $\mathbb{R}^3$, such that $S(u) v = u \times v$
 \item The Euclidean norm of a vector of real numbers is denoted by $\left\|\cdot \right\|$.
\item $1_n \in \mathbb{R}^{n \times n}$ denotes the identity matrix of dimension~$n$; 
$0_n \in \mathbb{R}^n$ denotes the zero column vector of dimension~$n$; $0_{n \times m} \in \mathbb{R}^{n \times m}$ denotes the zero matrix of dimension~$n \times m$.
\item $A$ denotes an inertial frame and $B$ a body-fixed frame.
\item $p_B \in \mathbb{R}^3$ denotes the origin of the $B$ frame expressed in the inertial frame, while $\ls^A R_B \in SO(3)$ is the rotation matrix that transforms a 3D vector expressed with the orientation of the $B$ frame in a 3D vector expressed in the $A$ frame. $\omega \in \mathbb{R}^3$ denotes the body angular velocity of body $B$, defined as $S(\omega) = \ls^A R_B^\top \ls^A \dot{R}_B$.
\item $\rmv \in \mathbb{R}^6$ indicates the body twist \cite[Chapter 3]{murray1994mathematical}, i.e.
$$
\rmv = \begin{bmatrix} \ls^A R_B^\top \dot{p}_B \\ \omega \end{bmatrix}$$
\item $\rmf \in \mathbb{R}^6$ indicates an external wrench exerted on the body \cite[Chapter 3]{murray1994mathematical}, i.e. 
$$\rmf = \begin{bmatrix} f \\ \mu \end{bmatrix}$$
where $f$ is the 3D external force and $\mu$ the 3D external moment, that are expressed in frame $B$. 
\item Given $\rmv = \begin{bmatrix} v^\top & \omega^\top \end{bmatrix}^\top \in \mathbb{R}^6$,  $\rmv \bar \times^*$ is the 6D force cross product operator \cite{featherstone2008}, defined as:
$$
\rmv \bar \times^* 
= 
\begin{bmatrix}
S(\omega) & 0_{3\times3} \\
S(v)      & S(\omega) 
\end{bmatrix}
$$
\item Given a symmetric matrix $I \in \mathbb{R}^{3\times3}$ the $\operatorname{vech}$ operator denotes the serialization operation on symmetric matrices:
$$
\operatorname{vech} \left(
\begin{bsmallmatrix}
I_{xx} & I_{xy} & I_{xz} \\
I_{xy} & I_{yy} & I_{yz} \\
I_{xz} & I_{yz} & I_{zz}
\end{bsmallmatrix} \right)
=
\begin{bsmallmatrix}
I_{xx} \\
I_{xy} \\
I_{xz} \\
I_{yy} \\
I_{yz} \\
I_{zz}
\end{bsmallmatrix}
$$
\item Given a symmetric matrix $I \in \mathbb{R}^{n\times n}$, $I \succeq 0$ denotes that the matrix is positive semidefinite, i.e. that all its eigenvalues are nonnegative.  
\item $ g \in \mathbb{R}^3$ is the constant vector of gravity acceleration expressed with the orientation of the inertial frame $A$.
\item $\rma^g \in \mathbb{R}^6$ denotes the \emph{proper} body acceleration, i.e. the difference between the body acceleration and the gravity acceleration:
$$
\rma^g = \dot{\rmv} - \begin{bmatrix} ~^A R_B^\top g \\ 0_{3} \end{bmatrix}
$$
\end{itemize}

\subsection{Rigid Body Dynamics} 




The well known Newton-Euler equations for a rigid body \cite[Chapter 3]{murray1994mathematical} are given by: 
\begin{equation}
\label{eq:newtonEuler}
M \rma^g + \rmv \bar{\times}^* M \rmv = \rmf
\end{equation}

where $M \in \mathbb{R}^{6 \times 6}$ is the 6D inertia matrix (also known as \emph{spatial inertia}):
\begin{equation}
    M = \begin{bmatrix} m 1_{3} & -mS(c) \\ 
                        m S(c)           & {I}_B 
        \end{bmatrix}.
\end{equation}
Where:
\begin{itemize}
    \item $m \in \mathbb{R}$ is the mass of the rigid body,
    \item $c \in \mathbb{R}^3$ is the center of mass of the rigid body, expressed in the frame $B$,
    \item $I_B \in \mathbb{R}^{3\times3}$ is the 3D inertia matrix of the rigid body, expressed with the orientation of frame $B$ and with respect to the frame $B$ origin. 
\end{itemize} 

The 6D inertia matrix is parametrized by 10 parameters, usually called the \emph{inertial parameters} of the rigid body \cite{hollerbach2008model}, that are defined as $\pi \in \mathbb{R}^{10}$:
\begin{equation}
    \pi = 
    \begin{bmatrix}
    m \\
    m c \\
    \operatorname{vech}(I_B)
    \end{bmatrix} .
\end{equation}

The Newton-Euler equations \eqref{eq:newtonEuler} can be written linearly \cite{garofalo2013closed,hollerbach2008model} in the inertial parameters :
\begin{equation}
\label{eq:inertialRegressor}
Y(\rma^g, \rmv) \pi = M \rma^g + \rmv \bar{\times}^* M \rmv = \rmf .
\end{equation}

\subsection{Relationship between the inertial parameters and the density function}
The mass distribution of a rigid body in space is described by its density function:
\begin{equation}
\rho(\cdot): \mathbb{R}^3 \mapsto \mathbb{R}_{\ge 0} .
\end{equation}
The domain of this function is the points of body expressed in the body-fixed frame $B$.
We consider the density equal to zero for the points outside the volume of the rigid body, so we can define the domain of $\rho(\cdot)$ to be all the points in the 3D space $\mathbb{R}^3$.

The inertial parameters are obviously a functional of the density $\rho(\cdot)$, in particular, we can define the functional $\pi_{d}(\cdot) : (\mathbb{R}^3 \mapsto \mathbb{R}_{\ge 0}) \mapsto \mathbb{R}^{10}$ that maps the density function to the corresponding inertial parameters: 
\begin{IEEEeqnarray}{rCl}
\label{eq:pid}
   \pi_{d}(\rho(\cdot)) &=&
    \begin{bmatrix}
    m(\rho(\cdot)) \\
    mc(\rho(\cdot)) \\
    \operatorname{vech}(I_B(\rho(\cdot))) 
    \end{bmatrix} = \nonumber \\ &=& 
   \begin{bmatrix}
     \iiint\limits_{\mathbb{R}^3} \rho(r) dr \\
     \iiint\limits_{\mathbb{R}^3} r \rho(r) dr \\
     \operatorname{vech}\left(
     \iiint\limits_{\mathbb{R}^3} S^\top({r}) S({r}) \rho({r}) d{r} \right)
           \end{bmatrix} .
\end{IEEEeqnarray}







\subsection{3D Inertia at the Center of Mass and Principal Axes}
\label{subsec:principalAxis}

The 3D inertia at the center of mass is defined as

\begin{equation}
\label{eq:comInertiaDef}
I_C = \iiint\limits_{\mathbb{R}^3} S^\top(r-c) S(r-c) \rho(r) dr .
\end{equation}

Exploiting the fact that $S(\cdot)$ is linear, the inertia matrix with respect to the center of mass can be written as: 

\begin{equation}
I_C =  I_B +  m S(c) S(c) .
\end{equation}

This result is known as \emph{parallel axis theorem}. 

As $I_C$ is symmetric, it can be diagonalized with an orthogonal matrix $Q \in SO(3)$: 
\begin{equation}
I_C = Q \operatorname{diag}{(J)}  Q^\top .
\end{equation}

Using \eqref{eq:comInertiaDef} the diagonal matrix $\operatorname{diag}{(J)}$ (with $J \in \mathbb{R}^3$) can be written as:
\begin{IEEEeqnarray}{rCl}
\operatorname{diag}{(J)} &=& \iiint\limits_{\mathbb{R}^3} Q^\top S^\top(r-c) S(r-c) Q \rho(r) dr = \nonumber \\
     &\iiint\limits_{\mathbb{R}^3}& S^\top(Q^\top (r-c)) S(Q^\top(r-c)) \rho(r) dr .
\end{IEEEeqnarray}

The operation mapping the body point $r$ to $Q^T(r-c)$ can be interpreted as a change of 
reference frame, from the body frame $B$ to a frame $C$ (a \emph{principal axes} frame) whose origin is the center 
of mass of the body and whose orientation is one in which the $I_C$ matrix is diagonal.

By expressing with $\tilde{r} = Q^\top \left( r-c \right)$ the generic point of the body expressed in the $C$ frame, and with $\tilde{\rho}(\tilde{r})$ the density with respect to the $C$ frame, we can write 
$\operatorname{diag}{J}$ as:
\begin{IEEEeqnarray}{rCl}
\operatorname{diag}{J} &=&   \iiint\limits_{\mathbb{R}^3}
S^\top(\tilde{r}) S(\tilde{r}) \rho(\tilde{r}) d\tilde{r} .
\end{IEEEeqnarray}
The elements of the J vector are: 
\begin{IEEEeqnarray}{rCl}
J_{x} = \iiint\limits_{\mathbb{R}^3} ({\tilde{y}}^2 + {\tilde{z}}^2) \tilde{\rho}(\tilde{r}) dr' , \IEEEyessubnumber \\
J_{y} = \iiint\limits_{\mathbb{R}^3} (\tilde{x}^2 + \tilde{z}^2) \tilde{\rho}(\tilde{r}) d\tilde{r} , \IEEEyessubnumber \\
J_{z} = \iiint\limits_{\mathbb{R}^3} (\tilde{x}^2 + \tilde{y}^2) \tilde{\rho}(\tilde{r}) d\tilde{r} \IEEEyessubnumber .
\end{IEEEeqnarray}
We can write them as:
\begin{IEEEeqnarray}{rClrClrCl}
J_{x} &=& L_{y} + L_{z} , \quad
J_{y} &=& L_{x} + L_{z} , \quad
J_{z} &=& L_{x} + L_{y} .
\end{IEEEeqnarray}
\begin{IEEEeqnarray}{rClrCl}
L_{x} &=& \iiint\limits_{\mathbb{R}^3} {\tilde{x}}^2  \tilde{\rho}(\tilde{r}) d\tilde{r}, \quad
L_{y} &=& \iiint\limits_{\mathbb{R}^3} {\tilde{y}}^2  \tilde{\rho}(\tilde{r}) d\tilde{r}, 
\end{IEEEeqnarray}
\begin{IEEEeqnarray}{rCl}
L_{z} &=& \iiint\limits_{\mathbb{R}^3} {\tilde{z}}^2  \tilde{\rho}(\tilde{r}) d\tilde{r}.
\end{IEEEeqnarray}
Where $L_x , L_y , L_z$ are  the \emph{central second moments of mass} of the density $\tilde{\rho}(\tilde{r})$.
It is clear that the non-negativity of $\tilde{\rho}(\tilde{r})$ constraints $L = \begin{bmatrix} L_{x} & L_{y} & L_{z} \end{bmatrix}^\top$ 
to be non-negative as well.
Furthermore, it is possible to see that the non-negativity of $L$ induces the \emph{triangular inequalities} on $\operatorname{diag}{\left(J\right)}$:
\begin{IEEEeqnarray}{lCrlCrlCr}
\label{eq:triangularInequalities}
 J_{x} &\leq& J_{y} + J_{z} , \quad 
 J_{y} \ &\leq& \ J_{x} + J_{z}, \quad 
 J_{z} \ &\leq& \ J_{x} + J_{y} .
\end{IEEEeqnarray}

\subsection{Inertial Parameters Identification}
Assuming that $N$ values for $F$,$A_g$ and $V$ are measured, equation \eqref{eq:inertialRegressor} can be used to estimate $\pi$ solving the following optimization problem:
\begin{equation}
\label{eq:optimizationProblemLinear}
 \hat{\pi} =  \argmin_{\pi \in \mathbb{R}^{10}}\ ~ \sum_{i = 1}^N \left\| Y(\rma^g_i, \rmv_i) \pi - \rmf_i \right\|^2 . \\
\end{equation}

However, this optimization does not take into account the physical properties of the inertial parameters $\pi$. For this reason, the following definition was introduced.
\begin{definition}
\label{def:physicalConsistency}
A vector of inertial parameters $\pi$ is called \emph{physical consistent} \cite{yoshida1994,yoshida2000} if: 
\begin{IEEEeqnarray}{rClrCl}
m(\pi) &\geq& 0 , \qquad I_C(\pi) &\succeq& 0 .
\end{IEEEeqnarray}
\end{definition}
This condition has nice properties (it ensures that the matrix $M$ is always invertible), but is still possible to find some \emph{physical consistent} inertial parameters that can't be generated by a physical density. 

\section{CONTRIBUTION}
\subsection{Full physical consistency}
In this subsection, we propose a new condition for assessing if a vector of inertial parameters can be generated from a physical rigid body.  
We will show that all the constraints that emerge due to this \emph{full physical consistency} condition are due to the non-negativity on the density function.
\begin{definition}
\label{eq:fullDefinition}
A vector of inertial parameters $\pi^* \in \mathbb{R}^{10}$ is called \emph{fully physical consistent}  if: 
\begin{IEEEeqnarray}{rCl}
\exists\ \rho(\cdot) : \mathbb{R}^3 \mapsto \mathbb{R}_{\geq 0} ~ \text{s.t.} ~ \pi^* = \pi_d(\rho(\cdot)).
\end{IEEEeqnarray}
\end{definition}
This definition extends the concept of \emph{physical consistent} inertial parameters to include also all possible constraints of inertial parameters, such as the triangular inequalities \eqref{eq:triangularInequalities} of the diagonal elements of the inertia matrix.

\begin{lemma}
\label{lemma:lemma1}
If a vector of inertial parameters $\pi \in \mathbb{R}^{10}$ is \emph{fully physical consistent} if follows that it is \emph{physical consistent}, according to Definition \ref{def:physicalConsistency}. 
\end{lemma}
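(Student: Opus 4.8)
The plan is to unfold the hypothesis and verify the two inequalities of Definition \ref{def:physicalConsistency} directly against the integral representations \eqref{eq:pid} and \eqref{eq:comInertiaDef}. By Definition \ref{eq:fullDefinition}, if $\pi$ is \emph{fully physical consistent} there exists a density $\rho(\cdot):\R^3\mapsto\R_{\geq0}$ with $\pi=\pi_d(\rho(\cdot))$; reading off the components of \eqref{eq:pid} this gives $m(\pi)=\iiint\limits_{\R^3}\rho(r)\,dr$, $m(\pi)c(\pi)=\iiint\limits_{\R^3}r\,\rho(r)\,dr$, and $I_B(\pi)=\iiint\limits_{\R^3}S^\top(r)S(r)\,\rho(r)\,dr$.

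First I would dispatch the mass condition: since $\rho(r)\geq0$ pointwise, the integral $m(\pi)=\iiint\limits_{\R^3}\rho(r)\,dr$ is nonnegative. If $m(\pi)=0$ then $\rho$ is zero almost everywhere, hence $m(\pi)c(\pi)=0$ and $I_B(\pi)=0$, so $I_C(\pi)=0\succeq0$ and there is nothing more to prove. Otherwise $m(\pi)>0$, the center of mass $c(\pi)=m(\pi)c(\pi)/m(\pi)$ is well defined, and combining the parallel axis theorem $I_C=I_B+mS(c)S(c)$ with the derivation preceding \eqref{eq:comInertiaDef} we obtain $I_C(\pi)=\iiint\limits_{\R^3}S^\top(r-c(\pi))S(r-c(\pi))\,\rho(r)\,dr$.

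Then I would establish $I_C(\pi)\succeq0$ by testing an arbitrary quadratic form: for any $x\in\R^3$,
\[
x^\top I_C(\pi)\,x=\iiint\limits_{\R^3}x^\top S^\top(r-c(\pi))S(r-c(\pi))x\,\rho(r)\,dr=\iiint\limits_{\R^3}\left\|S(r-c(\pi))x\right\|^2\rho(r)\,dr\geq0,
\]
because the integrand is the product of the nonnegative scalar $\left\|S(r-c(\pi))x\right\|^2$ and $\rho(r)\geq0$. Since $x$ is arbitrary, $I_C(\pi)\succeq0$, which together with $m(\pi)\geq0$ is exactly Definition \ref{def:physicalConsistency}.

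I do not expect a genuine obstacle: the argument is essentially ``the integral of a nonnegative integrand is nonnegative'' applied twice. The only points deserving an explicit word are the degenerate case $m(\pi)=0$, where $c(\pi)$ is undefined and must be treated separately, and the remark that $I_C(\pi)$ defined through the inertial parameters via the parallel axis theorem coincides with the integral form \eqref{eq:comInertiaDef} evaluated against the witnessing density $\rho(\cdot)$ — both of which are already available from the preceding subsections.
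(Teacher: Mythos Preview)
Your proposal is correct and follows the same conceptual route as the paper: both use the existence of a witnessing density $\rho(\cdot)$ to conclude that $m\geq 0$ and $I_C\succeq 0$ as standard properties of a physical rigid body. The difference is only one of explicitness: the paper's proof simply invokes ``classical properties of mass and the inertia matrix of a rigid body'' and defers to a textbook reference (Wittenburg, \S3.3.3), whereas you spell out the argument directly via the nonnegativity of the integrands and handle the degenerate case $m(\pi)=0$ explicitly --- a more self-contained treatment of the same idea.
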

\begin{proof}
If $\pi$ is \emph{fully physical consistent}, then it follows that there exists $\rho(\cdot)$ such that the corresponding 3D inertia at the center of mass $I_C$ can be written as a function of $\rho(\cdot)$. The positive semi-definiteness of $m$ and $I_C$ then follows from the classical properties of mass and the inertia matrix of a rigid body, see for example subsection 3.3.3 of \cite{wittenburg2007dynamics}. 
\end{proof}

\begin{lemma}
If a vector of inertial parameters $\pi \in \mathbb{R}^{10}$ is \emph{fully physical consistent}, the associated inertia matrices at the body origin $I_B(\pi)$ and at the center of mass $I_C(\pi)$ respect the triangular inequalities \eqref{eq:triangularInequalities}. 
\end{lemma}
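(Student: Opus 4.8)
The plan is to handle the two matrices separately, exhibiting in each case an orthonormal frame in which the matrix is diagonal and in which each diagonal entry is a sum of \emph{non-negative} second moments of the density guaranteed by full physical consistency; the triangular inequalities then fall out exactly as in \eqref{eq:triangularInequalities}. Concretely, from Definition \ref{eq:fullDefinition} I would first fix a density $\rho(\cdot):\mathbb{R}^3\mapsto\mathbb{R}_{\geq 0}$ with $\pi=\pi_d(\rho(\cdot))$, and keep it fixed throughout.

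For $I_C(\pi)$ there is essentially nothing new to prove: the construction of subsection \ref{subsec:principalAxis} already produces $Q\in\SO(3)$ with $I_C = Q\operatorname{diag}(J)Q^\top$ and $J_x = L_y+L_z$, $J_y = L_x+L_z$, $J_z = L_x+L_y$, where $L_x,L_y,L_z$ are the central second moments of the principal-axes density $\tilde\rho$ and are non-negative because $\tilde\rho\geq 0$. Since the eigenvalues of $I_C$ are precisely $J_x,J_y,J_z$, I would simply note that $J_x = L_y+L_z \leq (L_x+L_z)+(L_x+L_y) = J_y+J_z$ using $L_x\geq 0$, and symmetrically for the two other inequalities, so the eigenvalues of $I_C$ obey \eqref{eq:triangularInequalities}.

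For $I_B(\pi)$ I would re-run the very same argument, pointing out that the derivation in subsection \ref{subsec:principalAxis} never actually used that the reference point is the center of mass. Starting from $I_B(\rho(\cdot)) = \iiint\limits_{\mathbb{R}^3} S^\top(r)S(r)\rho(r)\,dr$, diagonalize the symmetric matrix as $I_B = Q_B\operatorname{diag}(J^B)Q_B^\top$ with $Q_B\in\SO(3)$, and interpret $\hat r := Q_B^\top r$ as a pure rotation of the body frame; writing $\hat\rho(\hat r) := \rho(Q_B\hat r)\geq 0$ and using $Q_B^\top S^\top(r)S(r)Q_B = S^\top(Q_B^\top r)S(Q_B^\top r)$ gives $\operatorname{diag}(J^B) = \iiint\limits_{\mathbb{R}^3} S^\top(\hat r)S(\hat r)\hat\rho(\hat r)\,d\hat r$. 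Hence $J^B_x = \hat L_y+\hat L_z$, $J^B_y = \hat L_x+\hat L_z$, $J^B_z = \hat L_x+\hat L_y$ with $\hat L_x,\hat L_y,\hat L_z\geq 0$ the second moments of $\hat\rho$ about the (rotated) body origin, and the triangular inequalities for the eigenvalues of $I_B$ follow exactly as before.

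The step I expect to be the main obstacle — more a pitfall than a real difficulty — is the temptation to derive the claim for $I_B$ from the one for $I_C$ via the parallel axis theorem $I_B = I_C + mS^\top(c)S(c)$: the triangular inequalities are \emph{not} in general preserved under addition of symmetric positive semidefinite matrices diagonalized in different frames, so this shortcut fails, and one genuinely has to apply the principal-axes argument directly to $I_B$. Fortunately this costs nothing, since $I_B$ has the same integral form as $I_C$ up to the choice of reference point, so the only care needed is to state the argument in a reference-point-agnostic way.
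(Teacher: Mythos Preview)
Your proposal is correct and follows essentially the same route as the paper: write $I_B$ and $I_C$ as functionals of the density guaranteed by full physical consistency, pass to a diagonalizing frame, and observe that each diagonal entry is a sum of two non-negative second moments, from which \eqref{eq:triangularInequalities} follows. The paper merely cites a textbook for this last step and adds as a hint exactly the observation you spell out --- that the diagonal entries in \emph{any} frame decompose as sums of non-negative second moments --- so your version is simply an explicit rendering of the paper's sketch; your side remark about the parallel-axis shortcut is a sound cautionary note but not needed for the argument itself.
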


\begin{proof}
This lemma can be proved by writing $I_B$ or $I_C$ as a functional of the density function $\rho(\cdot)$, as in the proof of Lemma \ref{lemma:lemma1}. Once $I_B$ or $I_C$ are written as a functional of $\rho(\cdot)$, the demonstration that they respect the triangle inequality can be found in any rigid body mechanics textbook,  see for example subsection 3.3.4 of \cite{wittenburg2007dynamics}. 
\end{proof}
To get a hint of the demonstration of Lemma 2, consider that the diagonal elements of the 3D inertia matrix with respect to an arbitrary frame can still be written as the sum of two non-negative  \emph{second moments of mass}. The triangle inequality then arises in a way similar to the case of the inertia expressed in the principal axes.

\subsection{Full physical consistent parametrization of inertia parameters}
In this subsection, we introduce a novel nonlinear parametrization of inertial parameters that ensures the \emph{full physical consistency} condition. 

We choose to parametrize the mass as an element of the spaces of non-negative numbers $m \in \mathbb{R}_{\geq 0}$. 

The center of mass do not have any constraints on its location, so we choose to parametrize it as an element of the 3D space $c \in \mathbb{R}^3$. 

For parametrize the 3D inertia matrix ensuring the properties described in subsection \ref{subsec:principalAxis} we choose the \emph{second moments of mass} $L \in \mathbb{R}^3_{\geq 0}$ to be components of our parametrization. In the following, we will show how this choice ensures the \emph{full physical consistency} of the resulting inertial parameters. 



The inertial parameters of a rigid body can then be parametrized by an element $\theta \in \mathfrak{P} = \mathbb{R}_{\ge 0} \times \mathbb{R}^3 \times  SO(3) \times \mathbb{R}_{\ge 0}^3$. In particular the components of $\theta$ are:
\begin{itemize}
    \item $m \in \mathbb{R}_{\ge 0}$ the mass of the body 
    \item $c \in \mathbb{R}^3$ the center of mass of the body 
    \item $Q \in SO(3)$ the rotation matrix between the body frame and the frame of principal axis at the center of mass
    \item $L \in \mathbb{R}_{\ge 0}^3$ the second central moment of mass along the principal axes 
\end{itemize}

In other terms, there is a function $\pi_p(\theta) : \mathfrak{P} \mapsto \mathbb{R}^{10}$ that maps this new parametrization to the corresponding inertial parameters:
\begin{IEEEeqnarray}{rCCCl}
\label{eq:pip}
\pi_p(\theta) 
&=&
    \begin{bsmallmatrix}
    m(\theta) \\
    mc(\theta) \\
    \operatorname{vech}\left(I_B(\theta)\right) \\
    \end{bsmallmatrix}  
    &=& 
    \begin{bsmallmatrix}
    m \\
    mc \\
    \operatorname{vech}\left( Q \operatorname{diag}{(PL)}  Q^\top - m S(c) S(c) \right) \\
    \end{bsmallmatrix} \nonumber
\end{IEEEeqnarray}
Where $P = \left[ \begin{smallmatrix}
0 & 1 & 1 \\
1 & 0 & 1 \\
1 & 1 & 0 
\end{smallmatrix} \right]$ is a matrix that maps $L$ to $J$.

\begin{theorem}
\label{thm:mainTheorem}
For each $\theta \in \mathfrak{P}$, there exists a density function $\rho(\cdot) : R^3 \mapsto R_{\geq 0}$ such that $\pi_d(\rho(\cdot)) = \pi_p(\theta)$, i.e. every  $\theta \in \mathfrak{P}$ generates \emph{fully physical consistent} inertial parameters.
\end{theorem}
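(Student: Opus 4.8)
The key observation is that the parametrization $\pi_p(\theta)$ is built from four independent data: a nonnegative mass $m$, an arbitrary center of mass $c \in \mathbb{R}^3$, a rotation $Q \in SO(3)$ describing the principal axes, and nonnegative second central moments $L = (L_x, L_y, L_z)$. So I would aim to build a density $\rho$ realizing all of these simultaneously. By the change-of-frame computation already carried out in subsection~\ref{subsec:principalAxis}, it suffices to build a density $\tilde\rho(\cdot)$ in the principal-axes frame $C$ whose total mass is $m$, whose center of mass is at the origin, and whose central second moments of mass along the three coordinate axes are exactly $L_x, L_y, L_z$; then pulling back through $r \mapsto Q^\top(r - c)$ produces a density in the body frame $B$ with the correct mass, correct center of mass $c$, correct principal-axis orientation $Q$, and hence correct $I_C = Q\operatorname{diag}(PL)Q^\top$ and, by the parallel axis theorem, correct $I_B = I_C - mS(c)S(c)$. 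This matches $\pi_p(\theta)$ termwise.

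\textbf{The construction of $\tilde\rho$ is the concrete heart of the proof, and it is essentially elementary.} I would use a density supported on (at most) six point-like masses, or — to stay honestly within the class of integrable nonnegative functions rather than distributions — six small uniform balls, placed symmetrically on the coordinate axes: mass $m/6$ at $(\pm a, 0, 0)$, mass $m/6$ at $(0, \pm b, 0)$, mass $m/6$ at $(0, 0, \pm c')$ (renaming to avoid clash with the center of mass $c$), with $a = \sqrt{3 L_x / m}$, $b = \sqrt{3 L_y / m}$, $c' = \sqrt{3 L_z / m}$ when $m > 0$. By symmetry the center of mass is at the origin, the total mass is $m$, and a direct computation gives $\int \tilde x^2 \tilde\rho = 2 \cdot (m/6) \cdot a^2 = L_x$, and likewise for the $y$ and $z$ moments, while the mixed second moments vanish by the axis symmetry — so $\operatorname{diag}(J) = \operatorname{diag}(PL)$ exactly. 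To keep $\tilde\rho$ a genuine function $\mathbb{R}^3 \mapsto \mathbb{R}_{\ge 0}$, I would smear each point mass over a ball of radius $\varepsilon$ and then either note that the corrections are $O(\varepsilon^2)$ and rescale, or more cleanly solve for the ball radii/positions exactly (the mixed moments still vanish by symmetry, and the diagonal moments are continuous and surjective onto $\mathbb{R}_{\ge 0}^3$ as the free parameters range over their domain). The degenerate case $m = 0$ forces $mc = 0$ and (since $L \ge 0$, and a zero-mass body must have $I_B = 0$ — note $\pi_p$ with $m=0$ gives $I_B = Q\operatorname{diag}(PL)Q^\top$, which need not vanish) actually requires care: I would handle $m=0$ by observing $\pi_p(\theta)$ with $m = 0$ need not be realizable unless $L = 0$ as well, so either restrict attention to $m > 0$ as the physically meaningful case, take a limit, or note the zero density realizes $\pi = 0$ and the $m=0$, $L \ne 0$ points are a measure-zero boundary artifact; I expect the paper simply treats $m > 0$.

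\textbf{The main obstacle is bookkeeping rather than mathematics:} making sure the change of variables $\tilde r = Q^\top(r-c)$ is pushed through all three blocks of $\pi_d$ consistently (mass is invariant; the first moment transforms as $\iiint r \rho(r)\,dr = Q \iiint \tilde r\, \tilde\rho(\tilde r)\, d\tilde r + c \iiint \tilde\rho = Q \cdot 0 + mc = mc$; and the inertia block transforms via $S(Q^\top w) = Q^\top S(w) Q$ exactly as in subsection~\ref{subsec:principalAxis}), and verifying the Jacobian of the change of variables is $1$ since $Q \in SO(3)$. Once the frame change is handled, the remaining content is the one-line moment computation for the six-mass (or six-ball) configuration. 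I would therefore present the proof as: (i) reduce to constructing $\tilde\rho$ in frame $C$ via the frame-change identities; (ii) exhibit the symmetric six-ball density and compute its moments; (iii) conclude $\pi_d(\rho) = \pi_p(\theta)$ block by block.
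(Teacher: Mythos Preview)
Your approach is essentially the same as the paper's: both proofs are constructive, work in the principal-axes frame $C$, and exhibit an explicit nonnegative density there with the prescribed mass and central second moments before pulling back to the body frame via $r\mapsto Q\tilde r + c$. The only difference is the specific density chosen: the paper uses a uniform-density axis-aligned cuboid with half-side lengths $d_i = \sqrt{3L_i/m}$, whereas you place six equal masses at distances $\sqrt{3L_i/m}$ along the coordinate axes; the shared formula is no accident, since both constructions solve the same per-axis moment equation $\int \tilde x^2\,\tilde\rho = L_x$. The paper's cuboid has the minor advantage that it is already a bona fide function, so the smearing-and-rescaling step you describe is unnecessary there. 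Your treatment of the $m=0$ boundary is in fact more careful than the paper's (which tacitly assumes $m>0$ in writing $\sqrt{3L_i/m}$), and your observation that $m=0$, $L\neq 0$ cannot be realized by any nonnegative density is correct and flags a genuine, if practically harmless, edge case in the theorem as stated.
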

\begin{proof}
The proof is given in Appendix \ref{proof:mainTheorem}.
\end{proof}

Using this parametrization, it is possible to recast the identification optimization problem \eqref{eq:optimizationProblemLinear} as:
\begin{IEEEeqnarray}{rCl}
\label{eq:optimizationProblemNonLinear}
 \hat{\pi} &=& \pi(\hat{\theta}) \\
\hat{\theta} &=&  \argmin_{\theta \in \mathfrak{P}} \sum_{i = 1}^N \left\| Y(\rma^g_i, \rmv_i) \pi(\theta) - \rmf_i \right\|^2
\end{IEEEeqnarray}

The main advantage of \eqref{eq:optimizationProblemNonLinear} with respect to \eqref{eq:optimizationProblemLinear} is that thanks to Theorem \ref{thm:mainTheorem} the identified inertial parameters $\hat{\pi}$ are ensured to be \emph{fully physically consistent}. However, the optimization variable $\theta$ does not live anymore in a Euclidean space, because $\mathfrak{P}$ includes $SO(3)$, so to solve this optimization problem we exploit specific techniques related to the \emph{optimization on manifolds}.

\subsection{Optimization on manifolds}

In a nutshell, an n-dimensional manifold $\mathcal{M}$ is a space that locally looks like $\mathbb{R}^n$, \emph{i.e.}, for any point $x$ of $\mathcal{M}$ there exists a smooth map $\varphi_x$ between an open set of $\mathbb{R}^n$ and a neighborhood of $x$, with $\varphi_x(0) = x$.

For this paper, we focus on $SO(3)$, a 3-dimensional manifold. As such, it can be parametrized \emph{locally} by $3$ variables, for example, a choice of Euler angles, but any such parametrization necessarily exhibits singularities when taken as a global map (e.g. gimbal lock for Euler angles), which can be detrimental to our optimization process.

For this reason, when addressing $SO(3)$ with classical optimization algorithms, it is often preferred to use one of the two following parametrizations:
\begin{itemize}
    \item unit quaternion, \emph{i.e.} an element $q$ of $\mathbb{R}^4$ with the additional constraint $\left\|q\right\| = 1$,
    \item rotation matrix, \emph{i.e.} an element $R$ of $\mathbb{R}^{3 \times 3}$ with the additional constraints $R^T R = I$ and $\det{R} \geq 0$. 
\end{itemize}

The alternative is to use optimization software working natively with manifolds~\cite{brossette2015humanoid}\cite{absil:book:2008} and solve
\begin{align}
\label{eq:finalProblem}
    \argmin_{\theta \in \mathbb{R}\times\mathbb{R}^3\times SO(3) \times \mathbb{R}^3} &\ \sum_{i = 1}^N \left\| Y(\rma^g_i, \rmv_i) \pi(\theta) - \rmf_i \right\|^2 \\
    \mbox{subj. to} &\ m \geq 0,\ L_x \geq 0,\ L_y \geq 0,\ L_z \geq 0
\end{align}

This alternative has an immediate advantage: we can write directly the problem \eqref{eq:optimizationProblemNonLinear} without the need to add any parametrization-related constraints. Because there are fewer variables and fewer constraints, it is also faster to solve. To check this, we compared the resolution of~\eqref{eq:optimizationProblemNonLinear} formulated with each of the three parametrizations (native $SO(3)$, unit quaternion, rotation matrix). We solved the three formulations with the solver presented in~\cite{brossette2015humanoid}, and the two last with an off-the-shelf solver (CFSQP~\cite{cfsqp:manual}), using the dataset presented in section~\ref{sec:experiments}. 
The formulation with native $SO(3)$ was consistently solved faster. We observed timings around $0.5$s for it, and over $1$s for non-manifold formulations with CFSQP. The mean time for an iteration was also the lowest with the native formulation (at least $30\%$ when compared to all other possibilities).

Working directly with manifolds has also an advantage that we do not leverage here, but could be useful for future work: at each iteration, the variables of the problem represent a fully physical consistent set of inertial parameters.
This is not the case with the other formulations we discussed, as the (additional) constraints are guaranteed to be satisfied only at the end of the optimization process. 
Having physically meaningful intermediate values can be useful to evaluate additional functions that presuppose it (additional constraints, external monitoring $\ldots$). 
It can also be leveraged for real-time applications where only a short time is allocated repeatedly to the inertial identification, so that when the optimization process is stopped after a few iterations, the output is physically valid.
With non-manifold formulations, at any given iteration, the parametrization-related constraints can be violated, thus, the variables might not lie in the manifold. It is then needed to project them on it. Denoting $\pi$ the projection (for example $\pi = \frac{q}{\left\|q\right\|}$ in the unit quaternion formulation), to evaluate a function $f$ on a manifold, we need to compute $f \circ \pi$. If further the gradient is needed, that projection must also be accounted for (\cite{bouyarmane2012humanoids} explains this issue in great details for free-floating robots).

In this study, we use the same solver and approach as presented in \cite{brossette2015humanoid} which was inspired from \cite{absil:book:2008}.
The driving idea of the optimization on manifold is to change the parametrization at each iteration. The problem at iteration k becomes:
\begin{IEEEeqnarray}{rClrCl}
  \min_{z_k \in \mathbb{R}^n}\ & f\circ\varphi_{x_k}(z) \quad
  \text{s.t.} \quad & c\circ\varphi_{x_k}(z) = 0 .
\end{IEEEeqnarray}
Then $x_{k+1} = \varphi_{x_k}(z_k)$ is guaranteed to belong to $\mathcal{M}$. The next iteration uses the same formulation around $x_{k+1}$.


The smooth maps $\varphi_x$ are built-in and are used automatically by the solver while the user only has to implement the functions of the optimization problem without the burden of worrying about the parametrization.

\section{EXPERIMENTS}
\label{sec:experiments}
The iCub is a full-body humanoid with 53 degrees of freedom \cite{metta2010icub}.  For validating the presented approach, we used the six-axis force/torque (F/T) sensor embedded in iCub's right arm to collect experimental F/T measurements. We locked the elbow, wrist and hands joints of the arm, simulating the presence of a rigid body directly attached to the F/T sensor, a scenario similar to the one in which an unknown payload needs to be identified \cite{kubus2008line}. 
\begin{figure}[htb]
\begin{overpic}[width=0.48\textwidth,natwidth=1235,natheight=742]{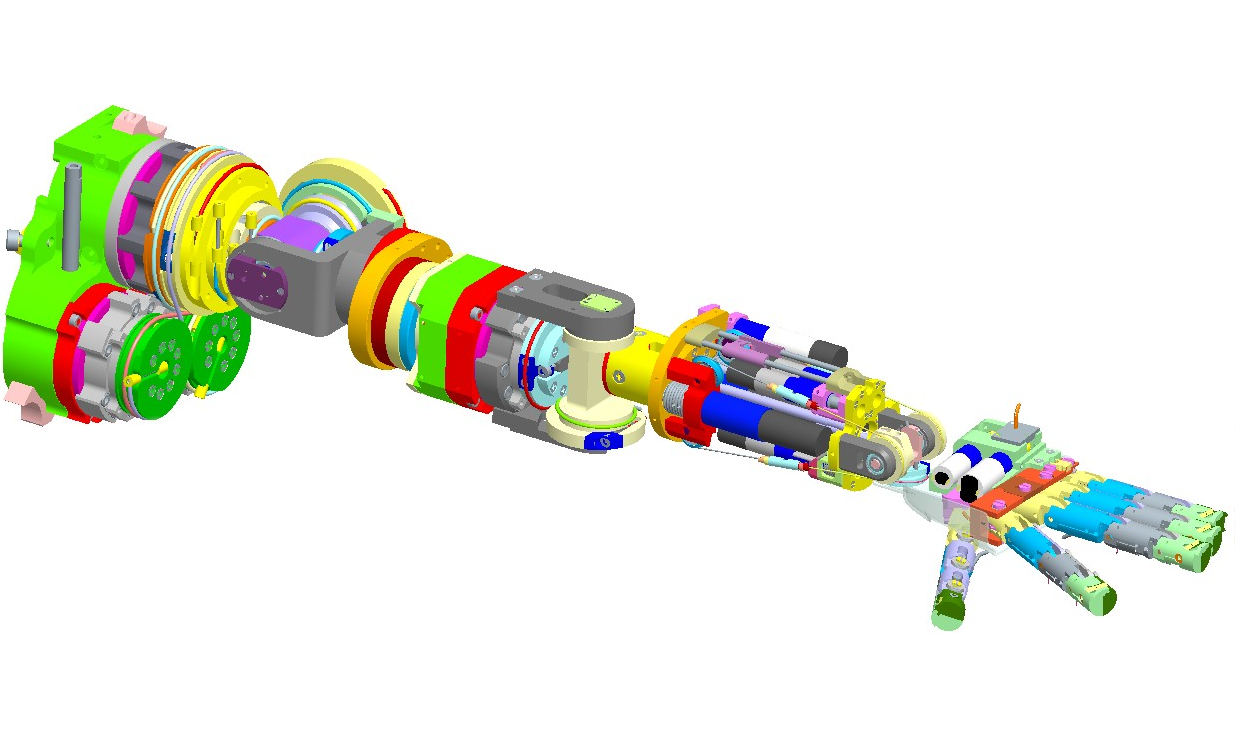}
\put(5,10){FT sensor}
\put(13,13){\vector(1,1){18}}
\put(38,50){Upper arm}
\put(43,49){\vector(0,-1){12}}
\put(65,45){Forearm}
\put(70,44){\vector(-1,-2){7}}
\end{overpic}
\caption{CAD drawing of the iCub arm used in the experiments. The used six-axis F/T sensor is visible in the middle of the upper arm link.}
\label{fig:cadArm}
\end{figure}

We generated five 60 seconds joint positions paths in which the three shoulder joints were reaching random joint position using point to point minimum-jerk like trajectories. The point to point trajectory completion times were $10$s, $5$s, $2$s, $1$s and $0.5$s for the different paths. 
We played these joint paths on the robot and we sampled at $100$Hz the F/T sensors and joint encoders output. We filtered the joint positions and obtained joint velocities and accelerations using a Savitzky-Golay filtering of order 2 and with a windows size of $499$, $41$, $21$, $9$, $7$ samples. We used joint positions, velocities and accelerations with the kinematic model of the robot to compute $\rma^g$ and $\rmv$ of the F/T sensor for each time sample.
We removed the unknown offset from the F/T measurements using the offset removal technique described in \cite{traversaro2015situ}.
We then solved the inertial identification problem using the classical linear algorithm \eqref{eq:optimizationProblemLinear} and the one using the proposed \emph{fully physical consistent} parametrization \eqref{eq:finalProblem}.
We report the identified inertial parameters in Table \ref{table:results}. 
It is interesting to highlight that for slow datasets (trajectory time of $10$s or $5$s) the unconstrained optimization problem \eqref{eq:optimizationProblemLinear} results in inertial parameters that are not fully physical consistency. 
In particular, this is due to the low values of angular velocities and acceleration, that do not properly excites the inertial parameters, which are then \emph{numerically not identifiable}. 
The proposed optimization problem clearly cannot identify these parameters anyway, as the identified parameters are an order of magnitude larger than the ones estimated for faster datasets, nevertheless, it always estimates inertial parameters that are fully physical consistent. 
For faster datasets (trajectory time of $1$s or $0.5$s) the results of the two optimization problems are the same because the high values of angular velocities and accelerations permit to identify all the parameters perfectly. 
While this is possible to identify all the inertial parameters of a single rigid body, this is not the case when identifying the inertial parameters of a complex structure such as a humanoid robot, for which both structural \cite{ayusawa2014identifiability} and numerical \cite{pham1991essential} not identifiable parameters exists. 
In this later application, the enforcement of full physical consistency  will always be necessary to get meaningful results.

 
 
\begin{table*}[ht]
\caption{Inertial parameters identified with the different datasets and the different optimization problems.} 
\begin{center}
\begin{tabular}{ |c|c|c|c|c|c|c|c|c|c|c|c|}
\hline
 \rowcolor[gray]{.9} \textbf{Trajectory Time} & \textbf{Optimization Manifold} & $\mathbf{m}$ & $\mathbf{mc_x}$ & $\mathbf{mc_y}$ & $\mathbf{mc_z}$  & $\mathbf{I_{xx}}$ & $\mathbf{I_{xy}}$  & $\mathbf{I_{xz}}$  & $\mathbf{I_{yy}}$  & $\mathbf{I_{yz}}$ & $\mathbf{I_{zz}}$ \\
\hline
    & $\mathbb{R}^{10}$ & 1.836 & 0.062 & 0.001 & 0.208 & \cellcolor[HTML]{FFCCCC} 0.580 & \cellcolor[HTML]{FFCCCC} 0.593 & \cellcolor[HTML]{FFCCCC} -0.541 &  \cellcolor[HTML]{FFCCCC}1.022 & \cellcolor[HTML]{FFCCCC} 0.190 & \cellcolor[HTML]{FFCCCC} -0.129 \\\cline{2-12} 
    \rowcolor[gray]{.9} \multirow{-2}{4em}{\cellcolor{white}10s}  & $\ipspace$ & 1.836 & 0.062 & 0.001 & 0.208 & 0.215 & 0.012 & -0.064 & 0.227 & 0.038 & 0.028 \\ 
\hline
& $\mathbb{R}^{10}$ & 1.842 & 0.061 & 0.000 & 0.206 & \cellcolor[HTML]{FFCCCC} 0.128 & \cellcolor[HTML]{FFCCCC} -0.018 & \cellcolor[HTML]{FFCCCC} -0.125 & \cellcolor[HTML]{FFCCCC} 0.125 & \cellcolor[HTML]{FFCCCC} 0.026 & \cellcolor[HTML]{FFCCCC} -0.001 \\\cline{2-12} 
\rowcolor[gray]{.9} \multirow{-2}{4em}{\cellcolor{white} 5s}  & $\ipspace$ & 1.842 & 0.060 & 0.000 & 0.206 & 0.166 & 0.001 & -0.089 & 0.216 & 0.001 & 0.050 \\ 
\hline
 & $\mathbb{R}^{10}$ & 1.852 & 0.060 & 0.001 & 0.206 & 0.065 & 0.001 & -0.035 & 0.066 & 0.006 & 0.007 \\\cline{2-12} 
\rowcolor[gray]{.9} \multirow{-2}{4em}{\cellcolor{white} 2s} & $\ipspace$ & 1.852 & 0.060 & 0.001 & 0.206 & 0.067 & 0.001 & -0.030 & 0.086 & 0.003 & 0.014 \\ 
\hline
& $\mathbb{R}^{10}$ & 1.820 & 0.060 & 0.002 & 0.205 & 0.032 & 0.0014 & -0.017 & 0.036 & 0.002 & 0.008 \\\cline{2-12} 
\rowcolor[gray]{.9} \multirow{-2}{4em}{\cellcolor{white}  1s} & $\ipspace$ & 1.820 & 0.060 & 0.002 & 0.205 & 0.034 & 0.001 & -0.015 & 0.042 & 0.001 & 0.009 \\ 
\hline
& $\mathbb{R}^{10}$ & 1.843 & 0.060 & 0.005 & 0.204 & 0.033 & 0.003 & -0.014 & 0.035 & 0.000 & 0.008 \\\cline{2-12} 
\rowcolor[gray]{.9} \multirow{-2}{4em}{\cellcolor{white} 0.5s} & $\ipspace$ & 1.844 & 0.059 & 0.004 & 0.204 & 0.037 & 0.001 & -0.013 & 0.039 & 0.000 & 0.008 \\ 
\hline
\end{tabular}
\end{center}
\label{table:results}

Inertial parameters identified on $\mathbb{R}^{10}$ optimization manifold that are not fully physical consistent are highlighted.

Masses are expressed in $kg$, first moment of masses in $kg.m$, inertia matrix elements in $kg.m^2$.

\end{table*}

\section{CONCLUSIONS}
The condition for the \emph{full physical consistency} of the rigid body inertial parameters was introduced, to extend the existing concept of the \emph{physical consistency} of the inertial parameters. A nonlinear parametrization of the inertial parameters that ensures \emph{full physical consistency} was also introduced and a nonlinear optimization on manifolds technique was adapted to solve the resulting nonlinear identification problem. The results have been validated with experiments on the identification of the inertial parameters of the right arm of the iCub humanoid robot.

For the sake of simplicity and for space constraints, in this work, only the case of the identification of a single rigid body was discussed. However, the \emph{full physical consistency} concept and the optimization on manifolds are general concepts, that we plan to integrate with existing techniques for whole body inertial parameters identification in humanoids \cite{ayusawa2014identifiability,jovic2015identification} and for adaptive control of underactuated robots \cite{pucci2015collocated}.



\section*{APPENDIX}


\subsection{Proof of Theorem 1}
\begin{proof}
\label{proof:mainTheorem}
We prove the statement in a constructive way: given an arbitrary element $\theta = (m,c,Q,L) \in \ipspace$ we build a density function $\rho(\cdot): \mathbb{R}^3 \mapsto \mathbb{R}_{\geq 0}$ such that $\pi_p(\theta) = \pi_d(\rho(\cdot))$. 
For example we can think of a cuboid of uniform unit density, with the center of the cuboid coincident with the center of mass of the inertial parameters (given by $c$), with the orientation of its symmetry axis aligned with the $C$ \emph{principal axes} frame defined by the $Q$ rotation matrix and the cuboid sides of lengths $2 d_x$ , $2 d_y$ and $2 d_z$, with:
\begin{equation}
\label{eq:cuboidSize}
    d = 
    \begin{bmatrix}
    d_x &
    d_y &
    d_z 
    \end{bmatrix}^{\top}
    = 
    \begin{bmatrix}
    \sqrt{3 \frac{L_x}{m}} &
    \sqrt{3 \frac{L_y}{m}} &
    \sqrt{3 \frac{L_z}{m}} 
    \end{bmatrix}^{\top}
\end{equation}

Its density function in the $C$ frame is given as:
\begin{equation*}
    \tilde{\rho}(\tilde{r}) = 
  \begin{cases} 
      \hfill 1 \hfill & \text{ if $-d \geq \tilde{r} \geq d$} \\
      \hfill 0 \hfill & \text{otherwise} \\
  \end{cases}
\end{equation*}
while the density function in the $B$ frame is given by:
\begin{equation}
  \label{eq:equivalentCuboidDensity}
    {\rho}({r}) = 
  \begin{cases} 
      \hfill 1 \hfill & \text{ if $-Qd+c \geq {r} \geq Qd+c$} \\
      \hfill 0 \hfill & \text{otherwise} \\
  \end{cases}.
\end{equation}

The density defined in \eqref{eq:equivalentCuboidDensity} and \eqref{eq:cuboidSize} can be seen as a function $\gamma(\cdot) : \ipspace \mapsto (\mathbb{R}^3 \mapsto \mathbb{R}_{\geq 0})$. The theorem is then demonstrated by using \eqref{eq:pid} and \eqref{eq:pip} to verify that:
$$
\pi_d(\gamma(\theta)) = \pi_p(\theta)
$$
is true $\forall~\theta \in \ipspace$.
\end{proof}


\bibliographystyle{IEEEtran}  
\bibliography{root}  

\addtolength{\textheight}{-12cm} 

\end{document}